\documentclass[preprint,12pt]{elsarticle}

\usepackage[T1]{fontenc}
\usepackage[utf8]{inputenc}
\usepackage{lmodern}
\usepackage{microtype}
\usepackage{setspace}
\usepackage{amsmath,amssymb,amsthm,mathtools}
\usepackage{booktabs,threeparttable}
\usepackage{graphicx}
\usepackage[dvipsnames,table]{xcolor}
\usepackage{float}
\usepackage{placeins}
\usepackage{caption}
\usepackage{enumitem}
\usepackage{lineno}
\usepackage{tikz}
\usetikzlibrary{arrows.meta,positioning,shapes.geometric,fit,calc}
\usepackage[hidelinks]{hyperref}
\usepackage[nameinlink,noabbrev]{cleveref}
\biboptions{sort&compress}

\hypersetup{
	pdftitle={ARISE: An adaptive residual-informed stability ensemble for feature selection in small-sample biomedical omics},
	pdfauthor={Zardad Khan; Amjad Ali; Naz Gul; Sheema Gul; Saeed Aldahmani},
	pdfkeywords={feature selection; biomarker discovery; multiclass classification; stability; nested cross-validation; omics; medical artificial intelligence}
}

\definecolor{cgblue}{HTML}{0072B2}
\definecolor{cgsky}{HTML}{56B4E9}
\definecolor{cggreen}{HTML}{009E73}
\definecolor{cgorange}{HTML}{E69F00}
\definecolor{cgred}{HTML}{D55E00}
\definecolor{cgpink}{HTML}{CC79A7}
\definecolor{cgdark}{HTML}{1B263B}
\definecolor{cglight}{HTML}{EDF2F4}

\newcommand{\ARISE}{ARISE}
\newcommand{\R}{\mathbb{R}}

\newcommand{\argmax}{\operatorname*{arg\,max}}
\newcommand{\E}{\mathbb{E}}
\newcommand{\Prob}{\mathbb{P}}
\newcommand{\best}[1]{\textbf{#1}}
\newcommand{\second}[1]{\underline{#1}}
\newcommand{\DeltaSeven}{\Delta_{7}}

\newtheorem{lemma}{Lemma}
\newtheorem{proposition}{Proposition}

\theoremstyle{remark}

\newfloat{algorithm}{tbp}{loa}
\floatname{algorithm}{Algorithm}

\newcommand{\plotplaceholder}[5][!t]{%
	\begin{figure}[#1]
		\centering
		\IfFileExists{#2}{%
			\includegraphics[width=#3\linewidth]{#2}%
		}{%
			\fbox{\begin{minipage}[c][4.6cm][c]{0.91\linewidth}
					\centering\bfseries\color{cgdark}
					Figure file not found\\[0.6em]
					\normalfont\ttfamily\detokenize{#2}
			\end{minipage}}%
		}
		\caption{#4}
		\label{#5}
	\end{figure}%
}

\begin{document}
	\begin{frontmatter}
		
		\title{ARISE: An adaptive residual-informed stability ensemble for feature selection in small-sample biomedical omics}
		
		\author[uaeu]{Zardad Khan\corref{cor1}}
		
		\ead{zaar@uaeu.ac.ae}
		\author[uaeu]{Amjad Ali}
		\author[awkum]{Naz Gul}
		\author[awkum]{Sheema Gul}
		\author[uaeu]{Saeed Aldahmani\corref{cor1}} 		\ead{saldahmani@uaeu.ac.ae}
		\cortext[cor1]{Corresponding author.}

		\address[uaeu]{Department of Statistics and Business Analytics, United Arab Emirates University, Al Ain, United Arab Emirates}
		\address[awkum]{Department of Statistics, Abdul Wali Khan University Mardan, Mardan, Pakistan}
		
		\begin{abstract}
			\textbf{Objective:} Molecular classification involving a small number of observations requires feature-selection methods that identify predictive, stable, and nonredundant feature subsets while remaining applicable to both binary and multiclass outcomes. We propose \ARISE\ (Adaptive Residual-Informed Stability Ensemble), an adaptive feature-selection framework that integrates multiple complementary relevance signals, class-balanced stability assessment, residual-informed redundancy control, and multiclass pairwise coverage.
			
			\textbf{Methods:} The proposed \ARISE\ method combines seven percentile-normalized relevance components through fifteen predefined selector profiles whose contributions are adaptively weighted using nested inner cross-validation. The framework was evaluated on five molecular datasets using eight feature-set sizes, three fixed classifiers (k-nearest neighbours, support vector machine, and random forest), and six standalone filter-based comparators. Generalization performance was estimated using five-fold outer cross-validation repeated 50 times and assessed using balanced accuracy, macro-F1, and Cohen's $\kappa$.
			
			\textbf{Results:} Across a total of 210,000 held-out assessments on the five datasets, the prposed \ARISE\ method achieved the highest overall performance for all three evaluation metrics, corresponding to all 15 dataset--metric combinations. Its equal-dataset mean performance was 0.793 for balanced accuracy, 0.776 for macro-F1, and 0.725 for Cohen's $\kappa$, exceeding the strongest aggregate comparator by 0.022, 0.023, and 0.028, respectively. \ARISE\ maintained strong performance across a range of compact feature-set sizes, while the best-performing feature budget varied among datasets, indicating that no single subset size was uniformly optimal.
			
			\textbf{Conclusion:} \ARISE\ yields a transparent and adaptive framework for molecular feature selection that jointly addresses relevance, stability, redundancy, and multiclass discrimination. Its consistent performance across the five molecular datasets, multiple classifiers, evaluation metrics, and feature-set sizes supports its potential for small-sample molecular classification.
		\end{abstract}
		
		\begin{keyword}
			 medical artificial intelligence \sep feature selection  \sep multiclass classification \sep stability \sep nested cross-validation \sep omics \sep biomarker discovery
		\end{keyword}
		
	\end{frontmatter}
	
	\modulolinenumbers[5]
	
	\section{Introduction}
	\label{sec:introduction}
	
	Molecular diagnostic studies often involve hundreds to tens of thousands of candidate features measured on comparatively small cohorts comprising only tens or hundreds of observations. In this ``large $p$, small $n$'' regime, a selector must do more than maximize apparent discrimination: it should resist sampling perturbations, avoid redundant panels, preserve minority-class signals, and separate every tuning decision from the held-out assessment. These requirements are especially important in medical artificial intelligence (AI), where an unstable or leakage-contaminated panel may appear accurate yet fail mechanistic interpretation, assay transfer, or external validation.
	
	Conventional filters provide speed and transparent rankings, but each emphasizes a particular signal geometry. Mean-shift scores favor approximately location-separated classes; mutual information (MI) detects broader dependence; and interval overlap and Bhattacharyya separation capture distributional structure. No one geometry is uniformly appropriate across tissues, diseases, platforms, class imbalance, and feature budgets. Conversely, wrappers that search many subsets can be prohibitively variable in small samples and can overfit the model-selection criterion.
	
	Therefore, we developed \ARISE, \emph{Adaptive Residual-Informed Stability Ensemble}, a multiclass, stability-aware selector that (i) expresses complementary filters on a common percentile scale, (ii) estimates class-balanced subsample robustness, (iii) penalizes within-class residual correlation rather than raw correlation, (iv) rewards coverage of unresolved class pairs, and (v) tunes a fixed profile library with nested minimax-regret racing and soft consensus. Learner hyperparameters remain fixed across selectors so that the comparison targets feature selection rather than classifier optimization.
	
	The current manuscript offers four contributions. First, it gives a unified construction for binary and multiclass feature selection with exact nested feature paths. Second, it combines complementary relevance evidence with class-balanced subsample robustness, within-class residual redundancy control, and explicit multiclass pair coverage. Third, it establishes boundedness, pair-coverage submodularity, class-shift invariance of the redundancy term, conditional subsample-mean concentration, consensus feasibility, deterministic nesting, and outer-test separation. Fourth, it evaluates the proposed selector under fully nested repeated cross-validation across five heterogeneous molecular datasets, three fixed classifiers, and three performance metrics: balanced accuracy, macro-F1, and Cohen's $\kappa$.
	
	The study is intended as a methodological evaluation rather than evidence of clinical deployment. The datasets span different molecular platforms, dimensionalities, and multiclass structures, while the classifiers are deliberately held fixed to isolate the effect of feature selection. The rest of the paper is arranged as follows. Section~\ref{sec:related} reviews related work on feature selection, stability, ensemble approaches, and biomedical applications. Section~\ref{sec:methods} describes the datasets, the proposed \ARISE\ framework, its relevance, stability, redundancy, and pair-coverage components, the nested profile-racing and consensus procedure, the comparison methods, classifiers, and validation protocol. Section~\ref{sec:theory} presents the main theoretical properties of the proposed framework. Section~\ref{sec:results} reports the experimental results, including aggregate performance, dataset-wise comparisons, feature-budget behaviour, and selected-set stability. Section~\ref{sec:discussion} discusses the methodological implications, limitations, and directions for further research. Finally, the Conclusion summarizes the principal findings and perspectives of the study.

	\section{Related work}
	\label{sec:related}
	
	There are three common categories of feature-selection methods, i.e., filters, wrappers, and embedded approaches \cite{Saeys2007Bioinformatics}. Maximum-relevance/minimum-redundancy formalized a central filter trade-off \cite{Peng2005MRMR}; support-vector recursive elimination provided an influential wrapper for cancer expression data \cite{Guyon2002SVMRFE}; and the lasso and elastic net established sparse embedded estimation \cite{Tibshirani1996Lasso,Zou2005ElasticNet}. These methods remain useful, but fixed assumptions about effect geometry or correlation can be brittle across heterogeneous molecular studies.
	
	Chance-corrected stability and stability selection measures made resampling consistency an explicit objective \cite{Meinshausen2010Stability,Nogueira2018Stability}. Equally important, model selection and preprocessing must be nested: nonnested tuning creates optimistic bias \cite{Cawley2010SelectionBias}, a risk recognized early in microarray prognosis studies \cite{Simon2003Pitfalls}. \ARISE\ integrates these ideas, but uses subsampling stability as a score component rather than claiming the family-wise error control of classical stability selection.
	
	Recent research work has focusd on multi-objective and ensemble selection. Dual-stage bias correction, filter--wrapper stacking, rough-hypervolume search, low-order information criteria, bi-level ensembles, rank-revealing subspaces, and expert-augmented selection target complementary combinations of accuracy, stability, compactness, and interpretability \cite{Cattelani2025DOSA,Budhraja2023FWSE,Zhou2025Rough,Gao2023LowOrder,Zhou2023BiLevel,Moslemi2024RRQR,Rabiee2024Expert}. Work within medical AI and biomedical decision support has emphasized stable graph-based biomarker selection \cite{Chereda2024Stable}, multi-objective clinical feature search \cite{Zhou2025Rough}, and quantifiable interpretation of molecular response models \cite{Shi2025DRExplainer}. \ARISE\ differs by using a fixed interpretable component library, within-class residual redundancy, explicit class-pair coverage, and a nested soft consensus rather than selecting a single winning profile.
	
Feature discovery in biomedical regimes increasingly spans network, single-cell, multi-view, and multi-omics learning. Examples include directed regulatory graphs \cite{Wang2024CeRVE}, trajectory-preserving single-cell selection \cite{Ranek2024DELVE}, differentiable information imbalance \cite{Wild2025DII}, electronic-health-record-assisted omics \cite{Mataraso2025COMET}, incomplete multi-omics integration \cite{Ma2025IntegrateAnyOmics}, and cross-platform molecular prediction \cite{Wang2022CPOP}. Foundation models, graph methods, and multiscale association frameworks broaden the representational space \cite{Pai2024Foundation,Valous2024GraphOmics,Schweickart2024AutoFocus}, while recent reviews emphasize precision immunotherapy and knowledge-guided genomics \cite{Li2024Immunotherapy,Guo2023BiomedicalGenomics}. Graphical ensemble selection additionally models dependencies among co-selected medical features \cite{Battistella2024Graphical}. Explainable gene-biomarker models and reviews stress that attribution alone is not clinical validity \cite{Yagin2023XAI,Ali2023Enlightening}; early-integration and training-dynamics studies similarly motivate explicit robustness audits \cite{Zhang2024MOINER,Karin2024Annotatability}. Our narrower objective is a transparent selector that can precede several fixed learners and remains computationally feasible when $p\gg n$.
	
	\section{Materials and methods}
	\label{sec:methods}
	
	\subsection{Study datasets and data provenance}
	\label{sec:data}
	\label{sec:provenance}
	
	The current evaluation utilized the five molecular datasets summarized in Table \ref{tab:datasets}. They span methylation and gene-expression measurements, three to six analysis classes, and feature dimensions from 45 to 22,283. Each dataset was evaluated with the same three fixed classifiers, i.e., kNN, SVM, and random forest, and the same three reported metrics, i.e.,balanced accuracy, macro-F1, and Cohen's $\kappa$. The dataset is the unit of across-study aggregation; folds, feature budgets, classifiers, and predictions are treated as repeated measurements within a dataset rather than as independent studies.
	
	\begin{table}[h]
		\centering
		\begin{threeparttable}
			\caption{Benckmark molecular datasets used in the benchmark evaluation. Here, $n$, $p$, and $K$ denote sample size, number of features, and number of classes, respectively. }
			\label{tab:datasets}
			\small
			\setlength{\tabcolsep}{20.0pt}
			\begin{tabular}{@{}clrrrl@{}}
				\toprule
				ID & Dataset & $n$ & $p$ & $K$ & Source \\
				\midrule
				D1 & HCC methylation & 56 & 45 & 3 & \cite{Archer2010HCC} \\
				D2 & Nutrimouse & 40 & 120 & 5 & \cite{Martin2007Nutrimouse} \\
				D3 & Yeoh ALL & 248 & 12625 & 6 & \cite{Yeoh2002ALL} \\
				D4 & Su & 102 & 5565 & 4 & \cite{Su2002Transcriptomes,Hochreiter2010FABIA} \\
				D5 & Burczynski & 127 & 22283 & 3 & \cite{Burczynski2006IBD} \\
				\bottomrule
			\end{tabular}
			
		\end{threeparttable}
	\end{table}
	
	The benchmark datasets were obtained from established R data resources. D1 used \texttt{hccframe} from \texttt{ordinalgmifs} version 1.0.9; D2 used \texttt{nutrimouse} from \texttt{whitening} version 1.4.0; and D3--D5 used \texttt{yeoh}, \texttt{su}, and \texttt{burczynski}, respectively, from \texttt{datamicroarray} version 0.2.3. Object-level provenance is reported because the analyzed dimensions can differ from those in the originating studies.
	
	D1 is a 45-CpG subset of GSE18081 for which technical replicates and matched cirrhotic specimens belonging to hepatocellular-carcinoma subjects were removed \cite{Archer2010HCC,Archer2014Ordinalgmifs}. D2 uses diet as the classification endpoint, while genotype is balanced but not modeled \cite{Martin2007Nutrimouse}. D4 preserves the processed object's native endpoint codes; no clinical tissue names are inferred. Its provenance links the original transcriptome resource to the later FABIA data layer \cite{Su2002Transcriptomes,Hochreiter2010FABIA}. D5 traces to GSE3365. These dataset-specific characteristics constrain biological interpretation. All additional preprocessing, feature scoring, tuning, and model fitting introduced in the present study were performed within the applicable training partition.
	
	\subsection{Notation and component construction}
	
	Let $\mathcal D=\{(\mathbf x_i,y_i)\}_{i=1}^{n}$, where $\mathbf x_i\in\R^p$ and $y_i\in\{1,\ldots,K\}$. Let $M=K(K-1)/2$ denote the number of unordered class pairs and let $J=7$ denote the number of relevance components. Every quantity estimated by the study pipeline, including additional preprocessing, component scores, subsampling quantities, profile tuning, feature paths, and classifier parameters, is learned from the applicable training partition. Transformations already embedded in the published source objects are treated as fixed inputs and are not re-estimated.
	
	For feature $g$ and pair $(a,b)$, let $\mu_{ag},s^2_{ag},n_a$ denote the class mean, variance, and size; set $v_{ag}=\max(s^2_{ag},\epsilon)$, $\epsilon=10^{-12}$, and $\delta_{abg}=\mu_{bg}-\mu_{ag}$. Six pairwise components are
	\begin{align}
		D^{\mathrm F}_{abg}&=\frac{\delta_{abg}^2}{v_{ag}+v_{bg}+\epsilon}, &
		D^{\mathrm{SNR}}_{abg}&=\frac{|\delta_{abg}|}{\sqrt{v_{ag}}+\sqrt{v_{bg}}+\epsilon},\\
		D^{\mathrm W}_{abg}&=\frac{|\delta_{abg}|}{\sqrt{v_{ag}/n_a+v_{bg}/n_b+\epsilon}}, &
		D^{\mathrm B}_{abg}&=\frac14\log\!\left[\frac14\left(\frac{v_{ag}}{v_{bg}}+\frac{v_{bg}}{v_{ag}}+2\right)\right]
		+\frac{\delta_{abg}^2}{4(v_{ag}+v_{bg})}.
	\end{align}
	For POS, the proposed cache uses whisker $\omega=1.5$ and robust class interval
	\[
	I_{ag}=[L_{ag},U_{ag}]=\left[\max\{\min x_{ag},Q_{1,ag}-\omega\,\mathrm{IQR}_{ag}\},
	\min\{\max x_{ag},Q_{3,ag}+\omega\,\mathrm{IQR}_{ag}\}\right].
	\]
	Let $O_{abg}=\max\{0,\min(U_{ag},U_{bg})-\max(L_{ag},L_{bg})\}$ and $H_{abg}=\max(U_{ag},U_{bg})-\min(L_{ag},L_{bg})$. The POS score is $D^{\mathrm{POS}}_{abg}=1-O_{abg}/H_{abg}$ when $H_{abg}>\epsilon$ and zero otherwise. Rank-based pair separation is $D^{\mathrm A}_{abg}=|2\mathrm{AUC}_{abg}-1|$. Here, the pairwise AUC is used only to construct this feature-relevance component; it is not an evaluation metric. Each component is converted to a feature percentile within its class pair and then averaged equally over pairs. MI uses three quantile bins by default and is also percentile-normalized. Thus every feature has $\mathbf p_g\in[0,1]^7$.
	
	\subsection{Profile relevance, stability, redundancy, and pair coverage}
	
	Each fixed candidate profile $c\in\{1,\ldots,C\}$, with $C=15$, has nonnegative component weights $\mathbf w_c\in\DeltaSeven$, where
	\[
	\DeltaSeven=\left\{\mathbf w\in\R_+^7:\sum_{j=1}^{7}w_j=1\right\}.
	\]
	Its relevance is
	\[
	R_{gc}=\mathbf w_c^\top\mathbf p_g.
	\]
	Let $n_{\min}=\min_k n_k$ and $m=\min\{n_{\min},\max(2,\lfloor0.70n_{\min}\rfloor)\}$. For each of $B=8$ independent draws, $m$ observations are sampled without replacement from every class. The POS, Fisher, SNR, Welch, Bhattacharyya, rank-based, and MI components are recomputed. The resulting profile-weighted scores are re-percentiled across the screened features to obtain $Z^{(b)}_{gc}$. The subsample-averaged robustness score and combined base score are
	\[
	S_{gc}=\frac1B\sum_{b=1}^{B}Z^{(b)}_{gc},\qquad
	A_{gc}=(1-\alpha_c)R_{gc}+\alpha_cS_{gc},\qquad \alpha_c\in[0,1].
	\]
	The quantity $S_{gc}$ is a mean subsample rank rather than a dispersion measure or selection frequency; selected-set Jaccard stability is evaluated separately.
	
	To avoid treating replicated class signal as nuisance correlation, redundancy is calculated after within-class centering. Let
	\[
	z_{ig}=\frac{x_{ig}-\overline{x}_{y_i g}}{s^{\mathrm{within}}_g},\qquad
	\rho_g(\mathcal S)=\max_{h\in\mathcal S}
	\left|\frac{\mathbf z_g^\top\mathbf z_h}{n-K}\right|,
	\]
	where $s^{\mathrm{within}}_g=\{\sum_i(x_{ig}-\overline{x}_{y_i g})^2/(n-K)\}^{1/2}$. A nonfinite or $<10^{-12}$ within-class scale is replaced by one, and any remaining nonfinite residual is set to zero. We set $\rho_g(\varnothing)=0$. No hard correlation deletion or mandatory cluster representative is used.
	
	For multiclass coverage, let $U^{(c)}_{gm}\in[0,1]$ be the profile-$c$ weighted pairwise utility for pair $m$, using the six pairwise components and their weights renormalized to sum to one. When $K=2$, or when profile $c$ has zero total weight on those six components (the pure MI profile), we set $F_c\equiv0$ and $\eta_c=0$. Otherwise define
	\[
	F_c(\mathcal S)=\frac1M\sum_{m=1}^{M}\max_{g\in\mathcal S}U^{(c)}_{gm},
	\qquad F_c(\varnothing)=0.
	\]
	Let $\mathcal G_c$ be profile $c$'s deterministic local candidate universe after screening, with $|\mathcal G_c|\ge q_{\max}$. Let $\eta_c\ge0$ denote the pair-coverage weight, $\lambda_{0c}\ge0$ the initial redundancy penalty, and $d_c\in[0,1]$ its linear decay parameter. At path step $t\le q_{\max}$, \ARISE\ selects
	\begin{equation}
		g_t=\argmax_{g\in\mathcal G_c\setminus\mathcal S_{t-1}}
		\left[A_{gc}+\eta_c\Delta F_c(g\mid\mathcal S_{t-1})
		-\lambda_c(t)\rho_g(\mathcal S_{t-1})\right],
		\label{eq:greedy}
	\end{equation}
	where $\Delta F_c(g\mid\mathcal S)=F_c(\mathcal S\cup\{g\})-F_c(\mathcal S)$ and
	\[
	\lambda_c(t)=
	\begin{cases}
		\lambda_{0c}\left[1-d_c\dfrac{t-1}{q_{\max}-1}\right], & q_{\max}>1,\\[0.4em]
		\lambda_{0c}, & q_{\max}=1.
	\end{cases}
	\]
	The output is one ordered path; all feature budgets are exact prefixes.
	
	\subsection{Two-stage nested profile racing and soft consensus}
	
	The 15 profiles comprise eight mixed profiles (balanced, location-shift, distributional, nonlinear, stability-first, diversity-first, low-redundancy, and maximin) and seven pure components. Within each outer-training partition, three-fold inner cross-validation performs two-stage racing. Stage~1 evaluates all profiles at budgets 5 and 15 with fixed kNN and SVM. A one-standard-error rule first defines the preferred set; exactly three profiles proceed to Stage~2, padding with the next-lowest-loss profiles when fewer than three qualify and truncating by deterministic loss order when more than three qualify. Stage~2 evaluates those three profiles at all eight budgets and all three classifiers.
	
	The three performance metrics used for profile racing and for reported evaluation are
	\[
	\mathcal L_{\mathrm T}=\mathcal L_{\mathrm R}
	=\{\mathrm{BA},\mathrm{MacroF1},\kappa\}.
	\]
	Let $m_{cf\ell}\in[0,1]$ be the inner-fold utility for metric $\ell\in\mathcal L_{\mathrm T}$ and configuration $c$, after mapping $\kappa$ to $(\kappa+1)/2$. Within fold $f$, $m_{cf\ell}$ is the equal mean over the applicable stage conditions: four cells in Stage~1 (two budgets $\times$ two classifiers) and 24 cells in Stage~2 (eight budgets $\times$ three classifiers). The fold-specific regret and loss are
	\begin{equation}
		r_{cf\ell}=\max_{c'}m_{c'f\ell}-m_{cf\ell},\qquad
		L_{cf}=0.70\max_{\ell\in\mathcal L_{\mathrm T}} r_{cf\ell}
		+0.30\,|\mathcal L_{\mathrm T}|^{-1}\sum_{\ell\in\mathcal L_{\mathrm T}}r_{cf\ell}.
		\label{eq:regret}
	\end{equation}
	After Stage~2, the best-plus-one-SE rule is recomputed among the three survivors; at least the best profile is retained in $\mathcal E$. Let $\overline L_c$ and $\mathrm{SE}_c$ be the mean and standard error of fold losses. Eligible profiles receive soft weights
	\begin{equation}
		\omega_c=\frac{\exp[-(\overline L_c-\overline L_{\min})/\tau]/(1+\mathrm{SE}_c/\tau)}
		{\sum_{j\in\mathcal E}\exp[-(\overline L_j-\overline L_{\min})/\tau]/(1+\mathrm{SE}_j/\tau)},
		\qquad \tau=0.02,
		\label{eq:soft}
	\end{equation}
	and every continuous path hyperparameter is averaged as $\theta^*=\sum_{c\in\mathcal E}\omega_c\theta_c$. Component scores, stability, and the greedy path are then recomputed on the full outer-training partition using this consensus.
	\begin{algorithm}[h]
		\caption{ARISE training and nested feature-path construction.}
		\label{alg:arise}
		\scriptsize
		\hrule
		\vspace{0.45em}
	
		\textbf{Input:} training data $\mathcal T$, budgets $\mathcal Q=\{1,5,10,15,20,25,30,35\}$, 15 fixed profiles, $B=8$, and seed.\\
		\textbf{Output:} one ordered consensus path whose prefixes give the selected sets for all $q\in\mathcal Q$.
		
		\begin{enumerate}[label=\textbf{\arabic*.},leftmargin=2.2em,itemsep=0.13em,topsep=0.35em]
			\item Create three stratified inner folds within $\mathcal T$.
			\item For each fold $f$, fit imputation and variance filtering on its inner-training subset $\mathcal T_{-f}$ and apply the frozen operations to validation subset $\mathcal V_f$.
			\item From $\mathcal T_{-f}$ only, construct the seven percentile-normalized components and deterministic candidate workspace.
			\item For every profile $c$, compute $R_{gc}$; draw $B$ balanced subsamples to obtain $S_{gc}$; and form $A_{gc}$.
			\item On $\mathcal T_{-f}$, compute residual redundancy, enabled pair utilities, and the profile path by \cref{eq:greedy}.
			\item Freeze each prefix, standardize its columns using $\mathcal T_{-f}$, fit kNN and SVM at $q\in\{5,15\}$, and score $\mathcal V_f$.
			\item Aggregate Stage~1 losses across folds and advance exactly three profiles by the padded/capped one-standard-error rule.
			\item For each $f$, reuse its training-only workspace to evaluate those survivors over $\mathcal Q$ and all fixed learners on $\mathcal V_f$.
			\item Compute \cref{eq:regret}, the final one-standard-error set, and soft weights $\omega_c$ by \cref{eq:soft}.
			\item Refit imputation and variance filtering on all of $\mathcal T$ and construct its component cache and candidate workspace.
			\item Recompute consensus relevance, balanced-subsample robustness, residual redundancy, pair coverage, and the greedy path on all of $\mathcal T$.
			\item Return the ordered consensus path; each requested selected set is its exact prefix.
		\end{enumerate}
		
		\vspace{0.25em}
		\hrule

	\end{algorithm}

	\Cref{fig:flow} summarizes the proposed method. The main sequence follows component construction, profile-specific path generation, inner-cross-validation racing, and soft consensus before refitting on the complete training partition.
	
	\begin{figure}[H]
		\begin{singlespace}
			\centering
			\begin{tikzpicture}[
				node distance=4mm and 10mm,
				font=\scriptsize,
				block/.style={rectangle,rounded corners,draw=cgdark,thick,fill=cglight,
					align=center,minimum height=6mm,text width=44mm,inner sep=1.4mm},
				emph/.style={block,fill=cgsky!22,draw=cgblue},
				side/.style={rectangle,rounded corners,draw=cgdark,fill=white,
					align=center,minimum height=7mm,text width=28mm,inner sep=1.2mm},
				output/.style={block,fill=cggreen!16,draw=cggreen!70!black},
				arrow/.style={-{Latex[length=2mm]},thick,draw=cgdark}
				]
				
				```
				\node[block] (train) {Current training partition};
				\node[emph,below=of train] (cache) {Seven normalized relevance components\\and candidate screening};
				
				\node[side,below left=6mm and 10mm of cache] (stable)
				{Class-balanced subsample\\robustness $S_{gc}$};
				\node[side,below right=6mm and 10mm of cache] (relevance)
				{Profile-weighted\\relevance $R_{gc}$};
				
				\node[emph,below=20mm of cache] (base)
				{Combined score\\$A_{gc}=(1-\alpha_c)R_{gc}+\alpha_cS_{gc}$};
				
				\node[side,below left=6mm and 10mm of base] (resid)
				{Residual redundancy\\penalty};
				\node[side,below right=6mm and 10mm of base] (coverage)
				{Pairwise coverage\\gain};
				
				\node[emph,below=19mm of base] (paths)
				{Greedy nested feature paths};
				
				\node[block,below=of paths] (race)
				{Inner-CV profile race\\15 profiles $\rightarrow$ 3 survivors};
				
				\node[emph,below=of race] (consensus)
				{One-SE selection and\\soft consensus};
				
				\node[output,below=of consensus] (final)
				{Final consensus path\\for $q\in\mathcal Q$};
				
				\draw[arrow] (train)--(cache);
				\draw[arrow] (cache.south west)--(stable.north);
				\draw[arrow] (cache.south east)--(relevance.north);
				\draw[arrow] (stable.south)|-(base.west);
				\draw[arrow] (relevance.south)|-(base.east);
				\draw[arrow] (train.west)--++(-42mm,0)|-(resid.west);
				\draw[arrow] (train.east)--++(42mm,0)|-(coverage.east);
				\draw[arrow] (resid.south)|-(paths.west);
				\draw[arrow] (coverage.south)|-(paths.east);
				\draw[arrow] (base)--(paths);
				\draw[arrow] (paths)--(race);
				\draw[arrow] (race)--(consensus);
				\draw[arrow] (consensus)--(final);
				
			\end{tikzpicture}
			\caption{Architecture of the proposed ARISE feature-selection framework.}
			\label{fig:flow}
		\end{singlespace}
		
	\end{figure}
	
	\subsection{Comparators and fixed classifiers}
	
	The six standalone comparators were POS, Fisher score, signal-to-noise ratio (SNR), MI, Bhattacharyya distance, and Welch score. Small fixed grids were used only where a natural hyperparameter existed: POS interval whisker $\omega\in\{1,1.5,2\}$ and MI bins $\in\{2,3,4\}$. Within each method, the choice minimized the same 0.70 worst-regret plus 0.30 mean-regret objective over three inner folds, using fixed SVM and budget 15. For each parameter-free pairwise filter, every class-pair score was computed, features were percentile-ranked within that pair, and the percentiles were averaged equally over the $K(K-1)/2$ pairs.
	
	Three classifiers were used throughout the main evaluation, with hyperparameters fixed identically across selectors: distance-weighted kNN with $k=5$; radial SVM with $C=1$, $\gamma=1/q$, shrinking enabled, and probability fitting disabled; and random forest \cite{Breiman2001RandomForests} with 120 trees, $\mathrm{mtry}=\lfloor\sqrt q\rfloor$, terminal node size 1, and bootstrap replacement. In each current training partition, class $k$ received weight $n/(Kn_k)$. For kNN this multiplied inverse-distance neighbor votes, while SVM and random forest received their native class-weight arguments. For every classifier, the predicted label was the argmax of its class-score matrix. kNN supplied normalized weighted votes and random forest supplied normalized vote proportions. For SVM, each pairwise decision margin was oriented by its training-class mean margins, divided by the corresponding training-pair standard deviation, transformed by the logistic function, accumulated for each class, and divided by the number of class pairs. The design therefore estimates average selector utility across three fixed, heterogeneous decision rules rather than the maximum attainable accuracy of separately tuned classifiers.
	
	The fixed library comprised eight mixed profiles (balanced, location-shift, distributional, nonlinear, stability-first, diversity-first, low-redundancy, and maximin) and seven pure-component profiles. The seven pure profiles place unit weight on one relevance component at a time, whereas the mixed profiles combine components according to their named design objective. 
	
	\subsection{Nested validation, metrics, and aggregation}
	
	The outer design used five stratified folds repeated fifty times (250 held-out fold values per budget--classifier condition); the inner design used one three-fold split inside each outer training set. The same three classifiers, i.e., kNN, SVM, and random forest, were evaluated for every dataset, method, and feature budget. All newly fitted missing-value handling, variance filtering, scaling, scoring, screening, profile selection, and model fitting were training-only. This procedural separation cannot reverse filtering, imputation, normalization, or other transformations already present in an upstream package object. For class $k$, let $R_k$ and $P_k$ denote recall and precision. The three reported performance metrics were
	\begin{align}
		\mathrm{BA}&=K^{-1}\sum_{k=1}^{K}R_k,\\
		\mathrm{MacroF1}&=K^{-1}\sum_{k=1}^{K}\frac{2P_kR_k}{P_k+R_k},\\
		\kappa&=\frac{p_o-p_e}{1-p_e},
	\end{align}
	where undefined class F1 terms were assigned zero, $p_o$ is observed agreement, and $p_e$ is chance agreement from predicted and observed margins. Balanced accuracy gives each class equal recall weight, macro-F1 additionally requires classwise precision, and $\kappa$ corrects overall agreement for agreement expected from the marginal label distributions. Their joint use therefore distinguishes minority-class recognition, precision--recall balance, and chance-corrected concordance.
	
	For each dataset, method, and metric, we first averaged the 250 outer-fold values within every budget--classifier condition, then averaged the 24 condition means equally. Methods were ranked descending with average ranks for exact ties. Across-dataset means therefore give every dataset equal weight, irrespective of its sample size.
	
	\subsection{Exploratory statistical analysis}
	
	Across the five dataset-level means, we used the Friedman statistic as an omnibus rank diagnostic. Proposed-versus-comparator differences were summarized by exact two-sided sign tests; six comparator $P$ values were Holm-adjusted within each reported metric. Repeated folds were never treated as independent replicates. Because only five heterogeneous datasets were available for across-study inference, these tests are interpreted as exploratory diagnostics; emphasis is placed on effect sizes, dataset-level consistency, and distributional overlap rather than on confirmatory significance.
	
	\subsection{Computational details}
	
	The computational procedure caches univariate components and pairwise summaries within each training fold, reuses nested feature prefixes, and parallelizes outer folds. Each profile's \emph{Screen} value is a local candidate cap applied after descending base-score ordering (525, 630, or 700 for mixed profiles and 700 for pure profiles). The consensus cap is $\max\{100,\operatorname{round}(\sum_c\omega_c\,\mathrm{Screen}_c)\}$. A shared workspace is formed from the union of each profile's top-$q_{\max}$ relevance-ranked features and is filled deterministically by mean relevance to at most 700 features. Fixed random seeds, stored split assignments, deterministic tie handling, and completeness checks were used to support reproducible evaluation.
	
	\section{Theoretical properties}
	\label{sec:theory}
	
	The following results characterize the selector conditional on a training sample and its fixed candidate-profile library. They do not claim that a selected panel is biologically causal or universally optimal.
	
	\begin{lemma}[Bounded construction]
		\label{lem:bounded}
		If component and subsample percentiles lie in $[0,1]$, $\mathbf w_c\in\DeltaSeven$, and $\alpha_c\in[0,1]$, then $R_{gc}$, $S_{gc}$, $A_{gc}$, $F_c(\mathcal S)$, and $\Delta F_c(g\mid\mathcal S)$ all lie in $[0,1]$; when pair coverage is enabled, so does $U^{(c)}_{gm}$.
	\end{lemma}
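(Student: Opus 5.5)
The argument is a chain of convex-combination and averaging observations, carried out in the order in which the quantities are defined. First, $R_{gc}=\mathbf w_c^\top\mathbf p_g$ with $\mathbf p_g\in[0,1]^7$ and $\mathbf w_c\in\DeltaSeven$. It is therefore a convex combination of numbers in $[0,1]$, so $0\le R_{gc}\le\sum_j w_{cj}=1$. Next, each $Z^{(b)}_{gc}$ is a percentile and so lies in $[0,1]$ by hypothesis. Their average $S_{gc}$ then lies in $[0,1]$. Finally, $A_{gc}=(1-\alpha_c)R_{gc}+\alpha_cS_{gc}$ with $\alpha_c\in[0,1]$ is again a convex combination of two numbers in $[0,1]$.

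For the pair utilities, I will use the construction of $U^{(c)}_{gm}$ when coverage is enabled, that is, when $K\ge3$ and the profile puts positive total mass on the six pairwise components. In that case $U^{(c)}_{gm}$ is the weighted average of the six pairwise percentiles for pair $m$, with weights renormalized by their positive sum. These renormalized weights are nonnegative and sum to one, so $U^{(c)}_{gm}\in[0,1]$ by the same convexity argument. The one point to state explicitly is that the renormalizing denominator is nonzero. This is exactly the enabling condition; the excluded cases are covered by the convention $F_c\equiv0$.

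For $F_c$, there are three cases. If $\mathcal S=\varnothing$, or coverage is disabled, then $F_c=0$. Otherwise each inner maximum $\max_{g\in\mathcal S}U^{(c)}_{gm}$ is a maximum over a finite nonempty set of values in $[0,1]$, so it lies in $[0,1]$. Averaging these $M$ maxima keeps $F_c(\mathcal S)\in[0,1]$.

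The only step needing more than convexity is $\Delta F_c(g\mid\mathcal S)\in[0,1]$. The upper bound is immediate: it is a difference of two numbers in $[0,1]$, so $\Delta F_c\le F_c(\mathcal S\cup\{g\})\le1$. For nonnegativity I will prove monotonicity. For each pair $m$, adding $g$ can only enlarge the set over which the maximum is taken, so
\[
\max_{h\in\mathcal S\cup\{g\}}U^{(c)}_{hm}\ge\max_{h\in\mathcal S}U^{(c)}_{hm}.
\]
When $\mathcal S=\varnothing$, the right-hand side is replaced by $0$, and the inequality still holds because $U^{(c)}_{gm}\ge0$. Averaging over $m$ gives $\Delta F_c(g\mid\mathcal S)\ge0$, and in the disabled case $\Delta F_c\equiv0$. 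The main obstacle here is bookkeeping rather than analysis: the cases of the empty set and of disabled coverage must be handled consistently with the conventions $F_c(\varnothing)=0$ and $F_c\equiv0$.
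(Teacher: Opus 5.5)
Your proof is correct and follows essentially the same route as the paper's. You use convexity for $R_{gc}$, $S_{gc}$, $A_{gc}$ and $U^{(c)}_{gm}$, boundedness of a finite maximum and its average for $F_c$, and monotonicity of each pairwise maximum for $0\le\Delta F_c(g\mid\mathcal S)\le 1$. Your explicit treatment of the empty-set and disabled-coverage conventions, and of the nonzero renormalizing denominator, makes precise what the paper's brief proof leaves implicit.
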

	\begin{proof}
		The relevance, subsample-robustness mean, and pair utility are convex combinations of bounded quantities. A coordinatewise maximum of values in $[0,1]$ remains bounded, as does its nonnegative average. Because adding a feature cannot reduce any coordinatewise maximum, the marginal gain is nonnegative and at most one.
	\end{proof}
	
	\begin{proposition}[Monotone submodular pair coverage]
		\label{prop:submodular}
		$F_c$ is normalized, monotone, and submodular. In the enabled multiclass case, its implemented gain is
		\[
		\Delta F_c(g\mid\mathcal S)=\frac1M\sum_{m=1}^{M}
		\left(U^{(c)}_{gm}-\max_{h\in\mathcal S}U^{(c)}_{hm}\right)_+.
		\]
	\end{proposition}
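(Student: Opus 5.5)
The plan is to treat $F_c$ as an average of $M$ one-dimensional ``max-coverage'' set functions and to verify every property coordinatewise. First I dispose of the disabled case ($K=2$ or the pure MI profile). There $F_c\equiv0$, so it is trivially normalized, monotone, and modular, hence submodular, and the gain formula is not asserted. In the enabled case I fix a pair $m$ and define $f_m(\mathcal S)=\max_{g\in\mathcal S}U^{(c)}_{gm}$ with $f_m(\varnothing)=0$. By \cref{lem:bounded}, $U^{(c)}_{gm}\in[0,1]$, so the convention $f_m(\varnothing)=0$ agrees with $\max\{0,\max_{h\in\mathcal S}U^{(c)}_{hm}\}$. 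Writing $\max_{h\in\varnothing}U^{(c)}_{hm}:=0$ throughout makes the empty set need no special treatment. Then $F_c=M^{-1}\sum_m f_m$. Nonnegative combinations preserve normalization, monotonicity, and submodularity, so it suffices to prove these for each $f_m$.

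The next step is to derive the gain identity, since submodularity follows from it most cleanly. Set $u=U^{(c)}_{gm}$ and $a=f_m(\mathcal S)$. Then $f_m(\mathcal S\cup\{g\})=\max(a,u)$, and the elementary identity $\max(a,u)-a=(u-a)_+$ gives $f_m(\mathcal S\cup\{g\})-f_m(\mathcal S)=(U^{(c)}_{gm}-\max_{h\in\mathcal S}U^{(c)}_{hm})_+$. If $g\in\mathcal S$ both sides are $0$. Averaging over $m$ yields the displayed expression for $\Delta F_c(g\mid\mathcal S)$. Normalization is $f_m(\varnothing)=0$ by definition. Monotonicity is immediate because each gain is a positive part and hence nonnegative; alternatively, the maximum over a larger set is no smaller.

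For submodularity I take $\mathcal S\subseteq\mathcal T$ and $g\notin\mathcal T$. Monotonicity gives $f_m(\mathcal S)\le f_m(\mathcal T)$. The map $a\mapsto(u-a)_+$ is nonincreasing in $a$. Hence $\Delta f_m(g\mid\mathcal S)\ge\Delta f_m(g\mid\mathcal T)$, which is the diminishing-returns characterization of submodularity. Summing over $m$ with weight $1/M$ transfers the inequality to $F_c$.

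I expect no step to be a real obstacle. The only place requiring care is the empty-set convention: the nonnegativity of $U^{(c)}_{gm}$ from \cref{lem:bounded} is what makes $F_c(\varnothing)=0$ consistent with the max-formula, so that the gain identity also holds when $\mathcal S=\varnothing$. I will state this explicitly.
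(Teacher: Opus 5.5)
Your proposal is correct and follows essentially the same route as the paper: you write $F_c$ as the nonnegative average of the $M$ per-pair maximum functions, verify normalization, monotonicity and diminishing returns for each, and dispose of the disabled case $F_c\equiv0$ trivially. You also fill in two details the paper leaves implicit: the gain identity via $\max(a,u)-a=(u-a)_+$, and the use of $U^{(c)}_{gm}\ge0$ from \cref{lem:bounded} to reconcile $F_c(\varnothing)=0$ with the max formula (which monotonicity at $\varnothing$ also needs).
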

	\begin{proof}
		For each pair $m$, $\max_{g\in\mathcal S}U^{(c)}_{gm}$ is zero at $\varnothing$, is nondecreasing with $\mathcal S$, and has diminishing returns because the current maximum can only increase. Nonnegative averaging over pairs preserves normalization, monotonicity, and submodularity. When coverage is disabled, $F_c\equiv0$ has the same properties trivially.
	\end{proof}
	
	\begin{lemma}[Within-class shift invariance]
		\label{lem:shift}
		For transformed values $x'_{ig}=x_{ig}+a_{y_i g}$, where $a_{kg}$ is any class-specific constant, the residualized vector $\mathbf z_g$ and every redundancy correlation are unchanged, provided the within-class scale is nonzero.
	\end{lemma}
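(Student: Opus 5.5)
The plan is a direct computation: follow the class-specific shift through each step of the residualization and observe that it cancels at the centering step. Everything downstream of centering is then a function of the centered residuals alone, so the shift cannot affect it.

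\emph{Class means.} Fix a feature $g$ and a class $k$. The transformed class mean is
\[
\overline{x}'_{kg}=\frac{1}{n_k}\sum_{i:y_i=k}(x_{ig}+a_{kg})=\overline{x}_{kg}+a_{kg},
\]
because $a_{kg}$ is constant within class $k$.

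\emph{Residuals.} For each observation $i$, the constant that was added is $a_{y_i g}$, which is exactly the shift in its own class mean. Hence the centered residual is unchanged:
\[
x'_{ig}-\overline{x}'_{y_i g}=x_{ig}-\overline{x}_{y_i g}.
\]

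\emph{Within-class scale.} The pooled within-class scale $s^{\mathrm{within}}_g$ is a function only of these residuals and of the fixed quantity $n-K$. It is therefore identical before and after the shift. In particular, it is nonzero and finite after the shift exactly when it is so before.

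\emph{Standardized vector.} Consequently $z'_{ig}=z_{ig}$ for every $i$, so $\mathbf z'_g=\mathbf z_g$. The same holds for every feature $h$, each with its own arbitrary class-specific shifts $a_{kh}$.

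\emph{Redundancy correlations.} Each inner product $\mathbf z_g^\top\mathbf z_h/(n-K)$ is therefore unchanged. Since $\rho_g(\mathcal S)$ is a maximum of absolute values of these inner products, it is unchanged for every $\mathcal S$. The convention $\rho_g(\varnothing)=0$ is unaffected.

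\emph{Main obstacle.} The only delicate point is the degenerate-scale and nonfinite handling, where a scale below $10^{-12}$ is replaced by one. The proviso that the within-class scale is nonzero rules out the truly degenerate case.

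Even the fallback does not break the argument, and I will remark on this. If the scale is replaced by one, it is replaced identically in both the original and shifted data, since the decision depends only on the residuals. The residuals themselves coincide in the two cases. So the conclusion holds whenever the raw data and shifts are finite.

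The one remaining subtlety is floating-point cancellation. I will state explicitly that the result is exact in real arithmetic and does not address numerical rounding.
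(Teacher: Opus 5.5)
Your proposal is correct and follows the paper's proof: the class mean absorbs the shift, so the centered residuals, the within-class scale, the standardized vectors, and hence every cross-product and $\rho_g(\mathcal S)$ are unchanged. Your extra remark also holds. The fallback that replaces a degenerate scale by one depends only on the unchanged residuals, so it is applied identically before and after the shift, which is slightly more than the paper states.
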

	\begin{proof}
		The transformed class mean is $\overline{x}'_{kg}=\overline{x}_{kg}+a_{kg}$, so $x'_{ig}-\overline{x}'_{y_i g}=x_{ig}-\overline{x}_{y_i g}$. The within-class scale and all residual cross-products are consequently identical.
	\end{proof}
	
	\begin{proposition}[Deterministic nested path]
		\label{prop:path}
		Conditional on training data, configuration, random seed, a candidate universe containing at least $q_{\max}$ finite-score features, and the feature-universe order, the greedy routine returns one no-duplicate ordered path. If $q_1<q_2$, then $\mathcal S_{q_1}\subset\mathcal S_{q_2}$ and $\mathcal S_{q_1}$ is exactly the first $q_1$ entries of $\mathcal S_{q_2}$.
	\end{proposition}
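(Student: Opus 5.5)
The plan is to reduce the statement to three facts about the greedy recursion \cref{eq:greedy}: every step is well defined and deterministic, each step depends only on the previously chosen prefix, and no feature is chosen twice. I will view the routine as producing $g_1,g_2,\dots,g_{q_{\max}}$ by induction on $t$, with $\mathcal S_t=\{g_1,\dots,g_t\}$ and $\mathcal S_0=\varnothing$. The selected set for a budget $q$ is then defined as the length-$q$ prefix of this single sequence.

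\textbf{Determinism.} All inputs to the objective are fixed once we condition on the training data, the configuration, and the seed. These inputs are $A_{gc}$ (including the $B$ subsample draws, fixed by the seed), the residualized vectors $\mathbf z_g$, the pair utilities $U^{(c)}_{gm}$, and the schedule $\lambda_c(t)$. The redundancy term $\rho_g(\mathcal S_{t-1})$ and the gain $\Delta F_c(g\mid\mathcal S_{t-1})$ are deterministic functions of the prefix $\mathcal S_{t-1}$. The gain is given explicitly by Proposition~\ref{prop:submodular}, and $\rho_g(\varnothing)=0$.

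\textbf{Well-posedness of each argmax.} Here I expect the main (though mild) obstacle. I must check that the step-$t$ objective is finite for every candidate. The term $A_{gc}$ is finite by assumption and by Lemma~\ref{lem:bounded}, and $\Delta F_c\in[0,1]$ by the same lemma. The term $\rho_g$ is finite because nonfinite scales are replaced by one and nonfinite residuals are set to zero. Next, the feasible set $\mathcal G_c\setminus\mathcal S_{t-1}$ is nonempty for every $t\le q_{\max}$, since it has at least $q_{\max}-(t-1)\ge1$ elements. A finite nonempty set of real scores always has a maximum. Ties will be broken by the fixed feature-universe order, so the argmax is a single element. It follows by induction that the sequence exists and is unique.

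\textbf{No duplicates and nesting.} Because $g_t$ is drawn from $\mathcal G_c\setminus\mathcal S_{t-1}$, we have $g_t\notin\{g_1,\dots,g_{t-1}\}$, so the path has no repeats. For $q_1<q_2$, the sets $\mathcal S_{q_1}$ and $\mathcal S_{q_2}$ are prefixes of the same sequence, so $\mathcal S_{q_1}$ is exactly its first $q_1$ entries. Since the entries are distinct, $|\mathcal S_{q_2}\setminus\mathcal S_{q_1}|=q_2-q_1>0$, which gives strict inclusion. I will also note that the path is computed once up to $q_{\max}$ and not recomputed per budget. The step-$t$ choice never depends on the target budget: $\lambda_c(t)$ depends only on $t$ and $q_{\max}$. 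Consequently, running the recursion to a smaller budget yields the same prefix.
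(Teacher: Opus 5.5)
Your proposal is correct and follows the same route as the paper: induction on $t$, with already-selected features masked from the argmax and ties resolved by a fixed total order on the local candidates. The paper specifies that order as decreasing base score, then original feature index. Each step therefore appends exactly one new feature, and every budget's set is a prefix of the single path. Your additional checks make explicit what the paper's short proof leaves implicit: finiteness of every score, nonemptiness of $\mathcal G_c\setminus\mathcal S_{t-1}$ for $t\le q_{\max}$, strictness of the inclusion, and the point that $\lambda_c(t)$ depends only on $t$ and $q_{\max}$, not on the target budget.
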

	\begin{proof}
		Before greedy updates, local candidates are ordered by decreasing base score and then original feature index. At each step, selected features are masked; any subsequent score tie is resolved by that fixed local order, so exactly one unselected feature is appended. Induction over $t$ gives uniqueness and the prefix property.
	\end{proof}
	
	\begin{lemma}[Conditional subsample-mean concentration]
		\label{lem:hoeffding}
		Let $\mathcal H$ be the sigma-field generated by $\mathcal D$, the fixed screen and cache, and all nonsubsampling randomness. Conditional on $\mathcal H$, assume the balanced subsamples are drawn independently, and let $Z_{gc}^{(b)}\in[0,1]$ for the $p_s$ screened features. For $\varepsilon>0$,
		\[
		\Prob\left(\left|S_{gc}-\E[S_{gc}\mid\mathcal H]\right|\ge\varepsilon\mid\mathcal H\right)
		\le 2\exp(-2B\varepsilon^2).
		\]
		Simultaneously over $p_s$ screened features and $C$ profiles, the union bound is
		\[
		\Prob\left(
		\max_{\substack{1\le g\le p_s\\1\le c\le C}}
		\left|S_{gc}-\E[S_{gc}\mid\mathcal H]\right|\ge\varepsilon
		\,\middle|\,\mathcal H\right)
		\le \min\{1,2p_sC\exp(-2B\varepsilon^2)\}.
		\]
	\end{lemma}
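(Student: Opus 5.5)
The plan is to apply Hoeffding's inequality conditionally on $\mathcal H$, and then take a union bound over all feature--profile pairs.

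\emph{Step 1: conditional independence of the summands.} Fix a feature $g$ and a profile $c$. Conditional on $\mathcal H$, the data, the screen, the cache and all profile weights are fixed. Each $Z^{(b)}_{gc}$ is therefore a deterministic measurable function of the $b$-th balanced subsample alone. The function is the same for every $b$: recompute the seven components, weight them by $\mathbf w_c$, and re-percentile across the $p_s$ screened features. By the independence assumption, $Z^{(1)}_{gc},\dots,Z^{(B)}_{gc}$ are then conditionally independent given $\mathcal H$. Each lies in $[0,1]$ by hypothesis; Lemma~\ref{lem:bounded} also gives this, since the re-percentiled scores are percentiles.

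\emph{Step 2: Hoeffding for one pair.} Apply Hoeffding's inequality to the mean $S_{gc}=B^{-1}\sum_b Z^{(b)}_{gc}$ of $B$ independent variables with range width $1$. It gives, for each tail,
\[
\Prob\left(S_{gc}-\E[S_{gc}\mid\mathcal H]\ge\varepsilon\mid\mathcal H\right)\le\exp(-2B^2\varepsilon^2/B)=\exp(-2B\varepsilon^2).
\]
Adding the two tails gives the factor $2$.

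The only subtle point is that Hoeffding's inequality is being used under a regular conditional probability. To handle this, I would note that conditioning on $\mathcal H$ reduces everything to an unconditional statement about i.i.d.\ subsampling draws given a fixed dataset. Hoeffding's inequality applied pointwise in the conditioning variable then holds almost surely. The summands do not need to be identically distributed. Only independence and boundedness are required, so the dependence of the percentile transform on all features within a single draw is harmless.

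\emph{Step 3: union bound.} The event that the maximum deviation is at least $\varepsilon$ is the union, over $g\le p_s$ and $c\le C$, of the individual deviation events. Conditional subadditivity bounds its probability by the sum of the $p_sC$ individual bounds, namely $2p_sC\exp(-2B\varepsilon^2)$. A probability never exceeds one, which gives the $\min\{1,\cdot\}$ form.

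No independence across features or profiles is needed here. This matters because they share the same subsamples and are strongly dependent.

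I expect the main obstacle to be Step 1, namely stating precisely that each $Z^{(b)}_{gc}$ depends only on the $b$-th draw given $\mathcal H$. The key point is that nothing else random, such as ties broken at random, enters except through $\mathcal H$. This is exactly why $\mathcal H$ includes all nonsubsampling randomness.
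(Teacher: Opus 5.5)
Your proposal is correct and matches the paper's proof: Hoeffding's inequality applied to the $B$ conditionally independent $[0,1]$-valued summands given $\mathcal H$, then a union bound over the $p_sC$ feature--profile pairs, truncated at one. Your added detail on why each $Z^{(b)}_{gc}$ depends only on the $b$-th draw given $\mathcal H$, and why no independence across features or profiles is needed, makes explicit what the paper's two-line argument leaves implicit.
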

	\begin{proof}
		Apply Hoeffding's inequality to the $B$ conditionally independent bounded variables and then the union bound. The result describes Monte Carlo concentration of a mean rank, not biological selection consistency or dispersion. With $B=8$ and as many as 700 screened features, the simultaneous union bound can still be numerically vacuous and is included only as a formal finite-sample statement.
	\end{proof}
	
	\begin{proposition}[Regret dominance]
		\label{prop:regret}
		After normalizing $\kappa$ to $[0,1]$, $r_{cf\ell},L_{cf}\in[0,1]$. If configuration $c$ weakly dominates $c'$ on every utility in $\mathcal L_{\mathrm T}$ within an inner fold, then $L_{cf}\le L_{c'f}$.
	\end{proposition}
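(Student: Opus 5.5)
The plan has two parts: first the range claims, then the dominance monotonicity.

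\emph{Range claims.} Each utility $m_{cf\ell}$ lies in $[0,1]$. Balanced accuracy and macro-F1 are averages of per-class recalls and F1 values in $[0,1]$; undefined F1 terms are set to zero, so they stay in range. For $\kappa$, I will note that $\kappa\in[-1,1]$ whenever it is defined, so the map $(\kappa+1)/2$ sends it into $[0,1]$. The within-fold utility is an equal mean over the four Stage~1 cells or the 24 Stage~2 cells, and an average of $[0,1]$ values stays in $[0,1]$. The regret $r_{cf\ell}=\max_{c'}m_{c'f\ell}-m_{cf\ell}$ is then a difference of two numbers in $[0,1]$. It is nonnegative because $c$ itself appears in the maximum. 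It is at most $\max_{c'}m_{c'f\ell}\le 1$. Finally, $L_{cf}$ is a convex combination, with weights $0.70+0.30=1$, of $\max_\ell r_{cf\ell}$ and the mean $|\mathcal L_{\mathrm T}|^{-1}\sum_\ell r_{cf\ell}$. Both terms lie in $[0,1]$, so $L_{cf}\in[0,1]$.

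\emph{Dominance.} Fix the fold $f$, and assume weak dominance: $m_{cf\ell}\ge m_{c'f\ell}$ for every $\ell\in\mathcal L_{\mathrm T}$. The key observation is that the benchmark $M_{f\ell}:=\max_{c''}m_{c''f\ell}$ is taken over the same competing set of configurations for both $c$ and $c'$ in that stage. It therefore does not depend on which configuration is being scored. This gives
\[
r_{cf\ell}=M_{f\ell}-m_{cf\ell}\le M_{f\ell}-m_{c'f\ell}=r_{c'f\ell}
\]
for every $\ell$. Both $\max_\ell$ and the arithmetic mean are coordinatewise nondecreasing maps on $\R^{|\mathcal L_{\mathrm T}|}$, and the weights $0.70$ and $0.30$ are nonnegative. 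Hence $L_{cf}\le L_{c'f}$.

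The main obstacle is definitional rather than computational. I must check that $\kappa$ really lies in $[-1,1]$ under the paper's convention, including the degenerate case $p_e=1$; I would treat that case by whatever convention the implementation assigns, which must itself lie in $[-1,1]$. I must also confirm that $c$ and $c'$ are compared within the same stage, so that they share the same maximum $M_{f\ell}$ and the same cell set. I will state both points as standing conventions, namely that $\kappa$ is normalized and that comparisons are within a common stage. After that the argument is immediate.
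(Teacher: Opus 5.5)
Your proposal is correct and follows the paper's proof. The fold-wise best value $\max_{c''}m_{c''f\ell}$ is common to $c$ and $c'$, so weak dominance gives $r_{cf\ell}\le r_{c'f\ell}$ for every $\ell$. Coordinatewise monotonicity of the maximum and of the mean, with nonnegative weights $0.70$ and $0.30$, then yields $L_{cf}\le L_{c'f}$.

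Your separate check of the range claims, which the paper leaves implicit, is also sound. In particular, the bound $\kappa\ge-1$ that you flagged does hold whenever $p_e<1$: it is equivalent to $p_o\ge 2p_e-1$, which is true for every confusion table.
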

	\begin{proof}
		The best metric value in the fold is common to both regret definitions, so dominance implies $r_{cf\ell}\le r_{c'f\ell}$ for every $\ell$. Both the maximum and arithmetic mean are monotone coordinatewise, proving the loss inequality.
	\end{proof}
	
	\begin{proposition}[Consensus feasibility]
		\label{prop:consensus}
		The weights in \cref{eq:soft} are nonnegative and sum to one. Hence $\theta^*$ lies in the convex hull of eligible profiles; in particular, the consensus component weights remain on the simplex.
	\end{proposition}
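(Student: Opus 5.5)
The argument is direct: show that each unnormalized weight is strictly positive and finite, normalize, and then use that the simplex is convex. For $c\in\mathcal E$, write
\[
u_c=\frac{\exp[-(\overline L_c-\overline L_{\min})/\tau]}{1+\mathrm{SE}_c/\tau}.
\]
By \cref{prop:regret}, every fold loss $L_{cf}$ lies in $[0,1]$. Hence $\overline L_c\in[0,1]$ and $\mathrm{SE}_c\ge 0$ is finite; a standard error is a nonnegative square root, and we set it to zero if only one fold exists. Since $\tau=0.02>0$, the numerator is a finite positive exponential and the denominator is at least $1$, so $0<u_c\le 1$. I take $\overline L_{\min}$ to be the minimum of $\overline L_j$ over the eligible set. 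Then the best profile contributes exactly $\exp(0)/(1+\mathrm{SE}/\tau)$.

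Next I use nonemptiness of the normalizing sum. The construction guarantees that at least the best profile is retained in $\mathcal E$, so $\sum_{j\in\mathcal E}u_j>0$. Therefore $\omega_c=u_c/\sum_j u_j$ is well defined and nonnegative, in fact strictly positive. Summing over $\mathcal E$ gives $\sum_c\omega_c=1$ immediately.

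The convex-hull statement follows from the definition $\theta^*=\sum_{c\in\mathcal E}\omega_c\theta_c$ with simplex weights $(\omega_c)$. That is precisely a convex combination of the eligible profiles' parameter vectors, so $\theta^*\in\operatorname{conv}\{\theta_c:c\in\mathcal E\}$. I apply this coordinatewise to the block $\mathbf w_c\in\DeltaSeven$. Nonnegativity gives $\sum_c\omega_c\mathbf w_c\in\R_+^7$, and linearity gives
\[
\sum_{j=1}^7\sum_c\omega_c w_{cj}=\sum_c\omega_c\cdot 1=1.
\]
So the consensus weights lie on $\DeltaSeven$; equivalently, $\DeltaSeven$ is convex. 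The same reasoning keeps the other averaged hyperparameters in their ranges: $\alpha^*\in[0,1]$, $d^*\in[0,1]$, and $\eta^*,\lambda_0^*\ge0$. Consequently \cref{lem:bounded} continues to apply to the consensus path.

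The main obstacle is making sure the normalization is well defined. This needs $\mathcal E\ne\varnothing$ and finite $\mathrm{SE}_c$, for example when the fold losses are degenerate. I will handle it by citing the retention rule for the best profile and the boundedness from \cref{prop:regret}. Everything else is elementary convexity.
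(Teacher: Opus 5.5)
Your proof is correct and follows the paper's argument: each unnormalized factor is strictly positive, the factors are normalized by their sum, and convex feasible sets (here $\DeltaSeven$ and the hyperparameter ranges) are closed under convex combination. Your additional checks, namely finiteness of $\overline L_c$ and $\mathrm{SE}_c$ via \cref{prop:regret} and nonemptiness of $\mathcal E$ from the best-profile retention rule, make explicit the well-definedness that the paper's one-line proof takes for granted.
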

	\begin{proof}
		Each unnormalized factor is strictly positive and the denominator is their sum. Convex combinations remain in every convex feasible set shared by the eligible parameters.
	\end{proof}
	
	\begin{proposition}[Outer-test separation]
		\label{prop:leakage}
		Given fixed outer splits and a fixed input data object, every preprocessing statistic newly fitted by the evaluation pipeline, selector score, tuning decision, feature path, and classifier fit is measurable with respect to the outer-training data and algorithmic random seeds. Outer-test values are only transformed by the frozen training-fitted operations and supplied to the frozen model for prediction and metric calculation.
	\end{proposition}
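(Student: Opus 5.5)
The plan is to prove the statement by tracing the computation graph of \cref{alg:arise} inside one outer fold. At each node we record which data the node reads. Measurability then follows by composition: a function of measurable quantities is measurable.

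Fix an outer split, so the outer-training index set $\mathcal T$ and the test set are deterministic. Let $\mathcal F_{\mathcal T}$ be the sigma-field generated by $\{(\mathbf x_i,y_i):i\in\mathcal T\}$ and the algorithmic seeds. The inner stratified folds of step~1 depend only on the labels in $\mathcal T$ and the seed, so they are $\mathcal F_{\mathcal T}$-measurable. Everything in steps~2--9 is computed from the $\mathcal T_{-f}$ and $\mathcal V_f$ partitions, both of which are subsets of $\mathcal T$:
\begin{itemize}
\item imputation and variance filters;
\item the seven percentile components $\mathbf p_g$;
\item the screens $\mathcal G_c$;
\item $R_{gc}$, $A_{gc}$, and the residuals $\mathbf z_g$;
\item the utilities $U^{(c)}_{gm}$;
\item the greedy paths, the kNN/SVM/RF fits, and the inner metrics $m_{cf\ell}$.
\end{itemize}
The subsample averages $S_{gc}$ are also in this list. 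They use balanced draws from $\mathcal T_{-f}$ governed by the seed, consistent with the conditioning setup of \cref{lem:hoeffding}. Consequently the regrets and losses in \cref{eq:regret}, the one-SE survivor sets, the weights $\omega_c$ in \cref{eq:soft} and the consensus $\theta^*$ are finite compositions of these quantities. Each composition uses sorting, max, exp, ratios, and deterministic tie-breaking, and all of these are Borel operations. Hence all of them are $\mathcal F_{\mathcal T}$-measurable. \Cref{prop:consensus} ensures $\theta^*$ is a valid configuration, so steps~10--12 are well defined. Those steps refit preprocessing and the greedy path on $\mathcal T$, and \Cref{prop:path} guarantees the output is a single deterministic function of $(\mathcal T,\theta^*,\text{seed})$. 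The column standardization and classifier fits for each prefix $q\in\mathcal Q$ likewise read only $\mathcal T$, with class weights $n/(Kn_k)$ computed from training counts.

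For the second claim we describe the test pass explicitly. Each outer-test vector $\mathbf x_i$ is (i) imputed and filtered with the frozen training maps, (ii) restricted to the frozen prefix, (iii) scaled with the training means and standard deviations, and (iv) passed to the frozen predictor. The SVM score orientation and standardization use training-pair statistics, so this step adds no test dependence. The test label $y_i$ enters only the metric formulas for BA, MacroF1 and $\kappa$. Formally, the prediction is $\hat y_i=h_{\mathcal T}(\mathbf x_i)$, where $h_{\mathcal T}$ is an $\mathcal F_{\mathcal T}$-measurable random map evaluated pointwise at the test input. The scoring functional is applied after this prediction, and nothing it produces is fed back into any earlier node.

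The main obstacle is making sure no hidden channel exists. Three candidates need checking:
\begin{itemize}
\item the shared workspace and screen caps;
\item the cached pairwise summaries;
\item parallelization across outer folds, which could in principle share caches.
\end{itemize}
For each, I would argue that the cache is keyed by the training partition and that seeds are fixed per fold in advance, so nothing is data-adaptive across folds. I would also state explicitly that transformations already embedded in the input object are outside the claim. Since "measurable" is being applied to an algorithmic pipeline, I would phrase the proof as an induction over the ordered steps of \cref{alg:arise}. The inductive invariant is that every stored object is a function of $\mathcal T$ and the seed alone.
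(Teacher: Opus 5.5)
Your proposal is correct and takes essentially the same route as the paper's proof. The paper argues ``by construction'' that every transformation, selector, tuning decision, and model is fitted only on inner or full outer-training partitions and frozen before being applied to outer-test values, with package-level preprocessing excluded; your induction over the steps of Algorithm~\ref{alg:arise} with the explicit sigma-field $\mathcal F_{\mathcal T}$, the explicit test pass, and the check of caches and parallel folds is a more detailed version of that same argument.
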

	\begin{proof}
		By construction, transformations, selectors, and models are fitted exclusively on inner or full outer-training partitions. Feature alignment, mean imputation, selected-column centering and scaling, and the fitted classifier are then frozen before application to outer-test values. This establishes procedural test separation conditional on the input object; it does not reverse any preprocessing already embedded in that object before the study pipeline begins.
	\end{proof}
	
	\paragraph{Complexity.}
	Conditional on an already constructed full component cache and a fixed screen, computing $C$ profile relevances costs $O(pJC)$. For a screened universe $p_s$, a target path $q$, and $M$ class pairs, the incremental greedy path stage uses $O(qnp_s+qMp_s)$ operations per profile and $O(np_s+p_sC+p_sM)$ additional working memory. Here $J=7$ components, $C=15$, $p_s\le700$, and $q\le35$. These are not end-to-end bounds: total computation also includes the full input and component cache (including $p\times M$ pairwise arrays), feature ordering during screening, and rebuilding $B=8$ subsample ranks on the screened set.
	
	\section{Results}
	\label{sec:results}
	
	\subsection{Evaluation integrity and aggregate method performance}
	
The evaluation comprised 210,000 held-out assessments across five datasets, seven feature-selection methods, eight feature budgets, three classifiers, and 250 outer-fold evaluations generated by 50 repetitions of five-fold cross-validation. All three performance measures, i.e.,balanced accuracy, macro-F1, and Cohen's $\kappa$, were successfully obtained for every evaluation. For each dataset--method combination, 6,000 dependent held-out values were generated across the eight feature budgets and three classifiers. Accordingly, each dataset--method--metric summary represented all 24 budget--classifier conditions.

\Cref{tab:global} summarizes the aggregate performance across the five datasets. \ARISE\ achieved the highest equal-dataset mean and the best mean rank for each of the three evaluation metrics. SNR was the strongest comparator for balanced accuracy and macro-F1, with absolute differences in favour of \ARISE\ of 0.0221 and 0.0233, respectively. For Cohen's $\kappa$, MI was the closest comparator, with an absolute difference of 0.0284. When the 15 dataset--metric summaries were weighted equally, \ARISE\ achieved the highest overall three-metric mean (0.7650), followed by SNR (0.7400), MI (0.7377), Fisher (0.7370), Welch (0.7338), POS (0.6693), and Bhattacharyya (0.6632). Overall, these results indicate consistently stronger aggregate performance of \ARISE\ across the evaluated datasets and performance criteria, with SNR and MI representing the most competitive standalone alternatives.

	\begin{table}[h]
		\centering
		\begin{threeparttable}
\caption{Aggregate comparison across the five evaluated datasets. Values are equal-dataset means with mean ranks in parentheses; lower rank is better. The final column is the equal-weight mean across all 15 dataset--metric combinations. Bold and underline denote the best and second-best values, respectively.}

			\label{tab:global}
			\small
			\setlength{\tabcolsep}{5.0pt}
			\begin{tabular}{@{}lcccc@{}}
				\toprule
				Method & Balanced accuracy & Macro-F1 & $\kappa$ & Three-metric mean \\
				\midrule
				ARISE & \best{0.7932 (1.0)} & \best{0.7765 (1.0)} & \best{0.7252 (1.0)} & \best{0.7650} \\
				SNR & \second{0.7711 (2.6)} & \second{0.7532 (2.8)} & 0.6957 (3.0) & \second{0.7400} \\
				MI & 0.7680 (4.4) & 0.7483 (4.6) & \second{0.6968 (3.8)} & 0.7377 \\
				Fisher & 0.7683 (3.8) & 0.7504 (3.6) & 0.6924 (3.8) & 0.7370 \\
				Welch & 0.7658 (4.0) & 0.7474 (3.8) & 0.6884 (4.0) & 0.7338 \\
				POS & 0.7114 (6.8) & 0.6848 (6.8) & 0.6116 (6.8) & 0.6693 \\
				Bhattacharyya & 0.7076 (5.4) & 0.6773 (5.4) & 0.6046 (5.6) & 0.6632 \\
				\bottomrule
			\end{tabular}
		
		\end{threeparttable}
	\end{table}
	
	The violin plots in \cref{fig:ba-violin,fig:f1-violin,fig:kappa-violin} retain the full observed spread that the means suppress. Each dataset--method violin contains the same 6,000 outer-fold--budget--classifier values used in the corresponding mean, while its embedded box summarizes the median and interquartile range. The distributions overlap among methods within every dataset, showing that aggregate superiority does not imply uniform fold-level dominance. D2 exhibits the lowest central performance and substantial dispersion, whereas D3 and D4 generally occupy the strongest ranges. Because folds, budgets, and classifiers reuse observations and training partitions, violin width is descriptive density rather than sampling evidence.
	
\begin{figure}[H]
	\centering
	\includegraphics[width=0.8\linewidth]{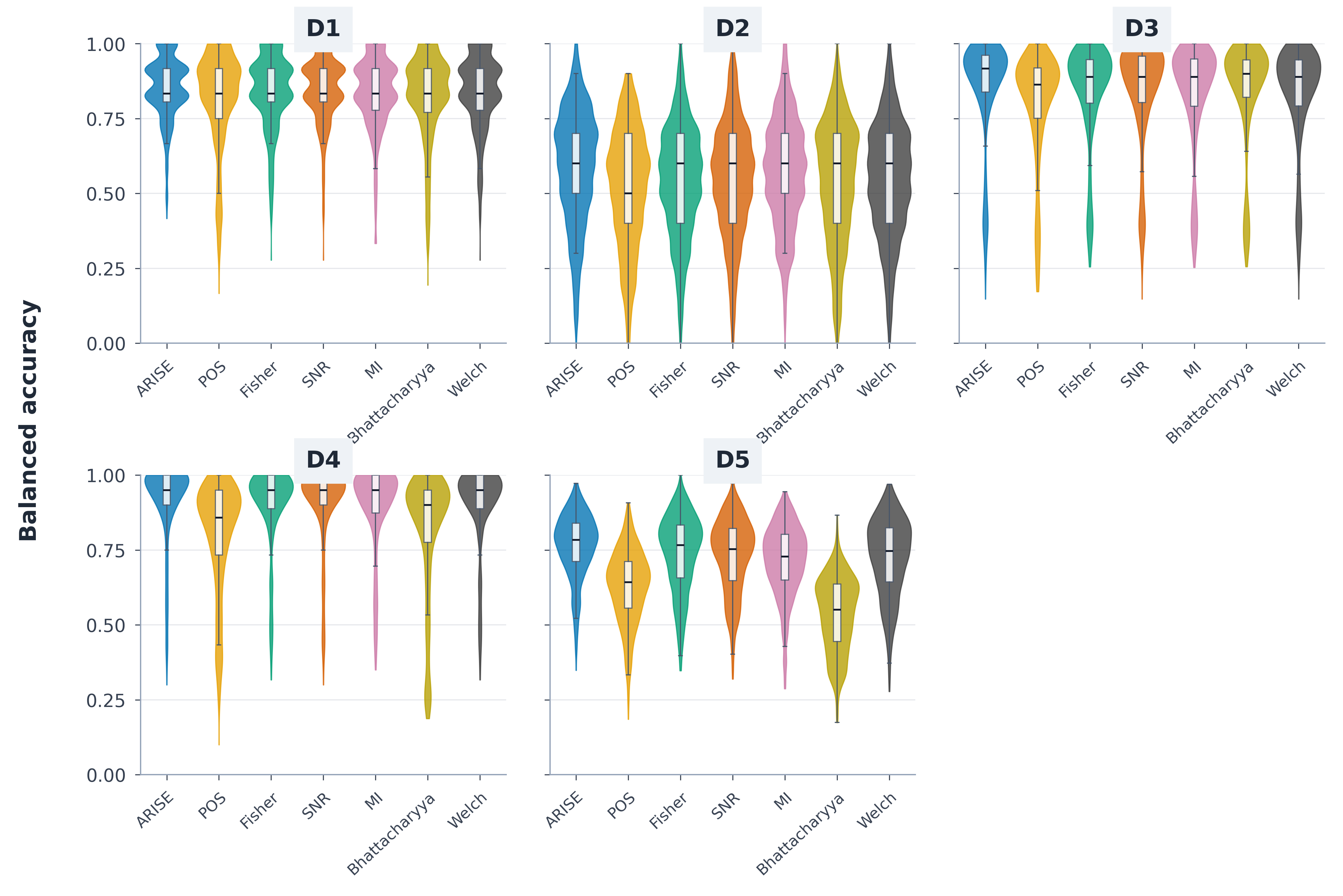}
	\caption{Balanced-accuracy distributions across the five datasets and seven methods.}
	\label{fig:ba-violin}
\end{figure}

\begin{figure}[H]
	\centering
	\includegraphics[width=0.8\linewidth]{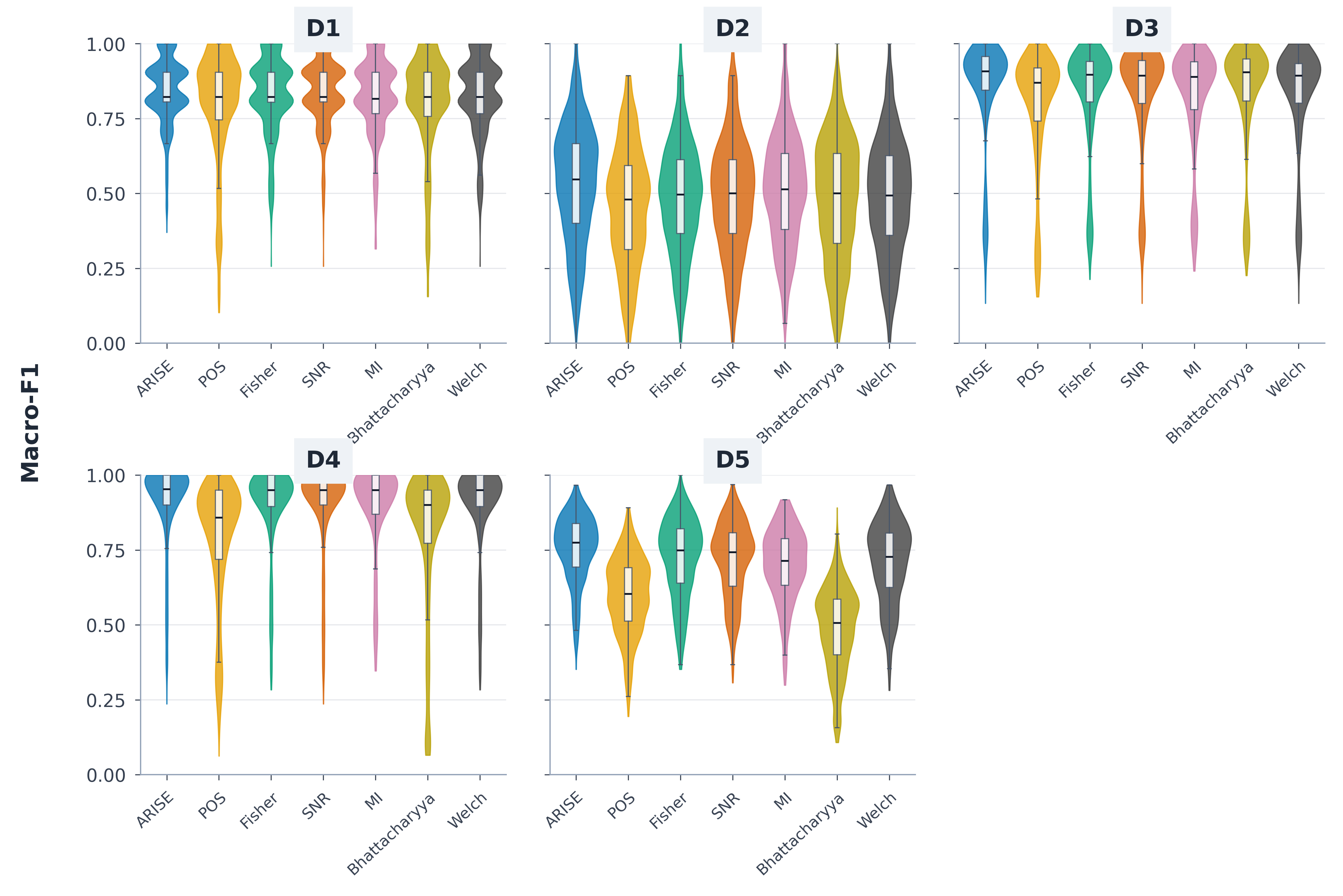}
	\caption{Macro-F1 distributions across the five datasets and seven methods.}
	\label{fig:f1-violin}
\end{figure}

\begin{figure}[H]
	\centering
	\includegraphics[width=0.8\linewidth]{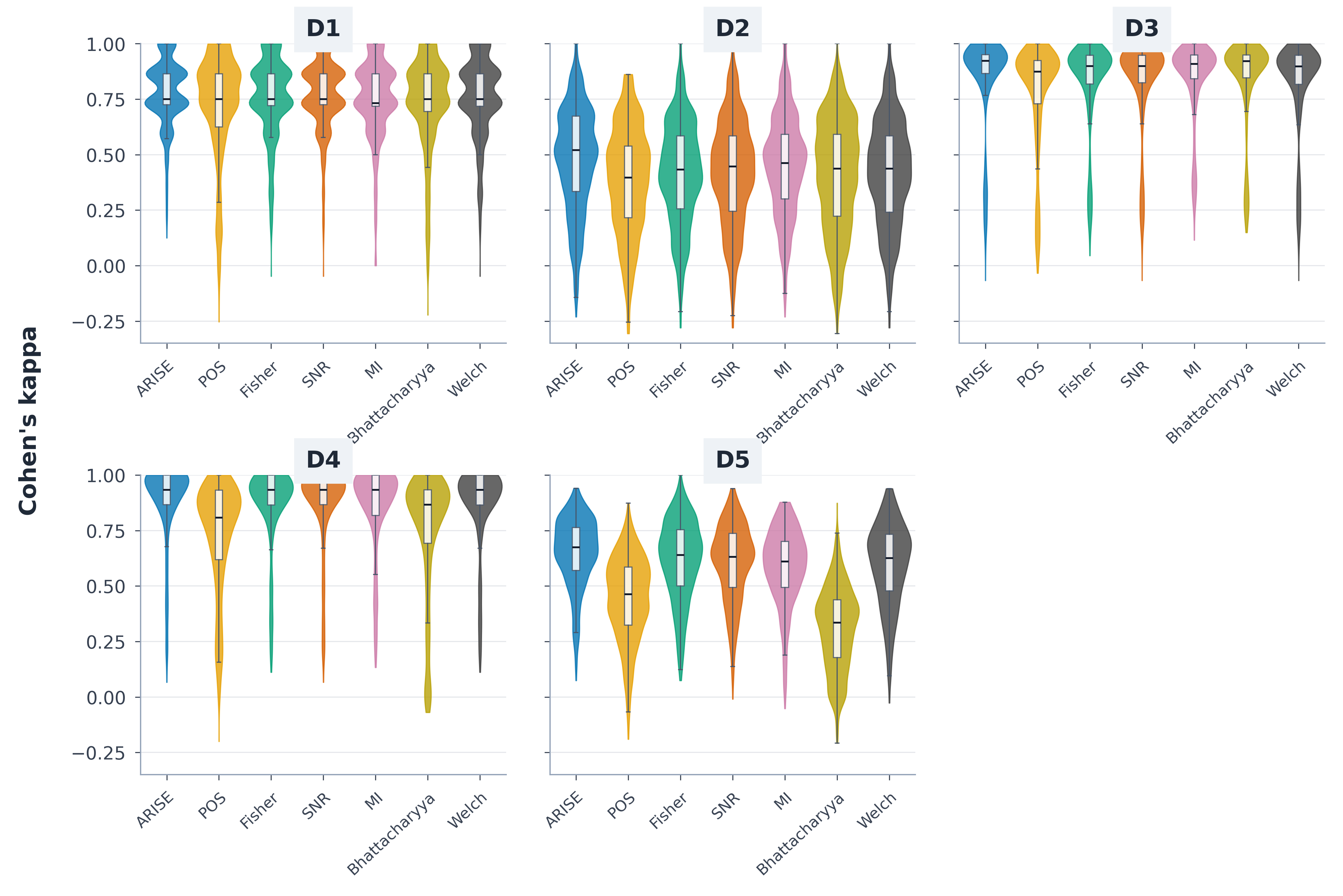}
	\caption{Cohen's $\kappa$ distributions across the five datasets and seven methods.}
	\label{fig:kappa-violin}
\end{figure}

	\subsection{Dataset-wise consistency and effect concentration}
	
	The aggregate position was consistent across datasets: \ARISE\ ranked first among the seven displayed methods in all 15 dataset--metric cells (\cref{tab:datawise}). The absolute advantage over the best remaining comparator ranged from 0.0051 to 0.0261 for balanced accuracy, from 0.0044 to 0.0316 for macro-F1, and from 0.0054 to 0.0383 for $\kappa$. D5 showed the largest advantage on every reported metric, whereas D1 had the smallest balanced-accuracy and macro-F1 margins and D3 the smallest $\kappa$ margin. 
\begin{table}[H]
	\centering
	\caption{Dataset-wise performance of ARISE. Values are means with ranks among the seven methods in parentheses, averaged across the 24 budget--classifier conditions.}
	\label{tab:datawise}
	\small
	\begin{tabular}{lccc}
		\toprule
		\textbf{Dataset} & \textbf{Balanced accuracy} & \textbf{Macro-F1} & \textbf{Cohen's $\kappa$} \\
		\midrule
		D1 & 0.8524 (1.0) & 0.8416 (1.0) & 0.7794 (1.0) \\
		D2 & 0.5903 (1.0) & 0.5351 (1.0) & 0.4744 (1.0) \\
		D3 & 0.8513 (1.0) & 0.8465 (1.0) & 0.8440 (1.0) \\
		D4 & 0.9083 (1.0) & 0.9045 (1.0) & 0.8760 (1.0) \\
		D5 & 0.7638 (1.0) & 0.7546 (1.0) & 0.6520 (1.0) \\
		\bottomrule
	\end{tabular}
\end{table}

\begin{table}[htbp]
	\centering
	\caption{ Friedman rank analysis across the five datasets.}
	\label{tab:statistical}
	\small
	\begin{tabular}{lccc}
		\hline
		Metric & Friedman statistic & df & $P$-value \\
		\hline
		Balanced accuracy & 22.457 & 6 & 0.00100 \\
		Macro-F1 & 22.286 & 6 & 0.00107 \\
		Cohen's $\kappa$ & 21.943 & 6 & 0.00124 \\
		\hline
	\end{tabular}
\end{table}

The Friedman analysis indicated differences in method rankings across all three evaluation metrics (Table~\ref{tab:statistical}). The corresponding asymptotic $P$-values were 0.00100 for balanced accuracy, 0.00107 for macro-F1, and 0.00124 for Cohen's $\kappa$. However, because only five datasets constitute independent experimental blocks, these results should be interpreted as exploratory rather than confirmatory. Accordingly, greater emphasis is placed on the consistency of dataset-wise rankings and the magnitude of the observed performance differences.

	\subsection{Feature-budget behavior and selected-set stability}
	
	The ARISE condition structure is shown compactly in \cref{fig:performance-atlas}: five dataset rows, three metric panels, and 24 classifier--budget columns formed by three classifiers and eight feature budgets. Thus, the atlas contains $5\times3\times3\times8=360$ condition means. Averaging those 360 condition means equally, the common-budget composite rose from 0.4595 at one feature to 0.8239 at 20, 0.8241 at 25, and a maximum of 0.8289 at 30, before decreasing to 0.8166 at 35. At 30 features, the separate means were 0.8501 for balanced accuracy, 0.8359 for macro-F1, and 0.8007 for $\kappa$. Thirty features is therefore the best observed \emph{common} budget for this panel, not a universal optimum: the dataset-specific composite maxima occurred at 20 features for D1, 15 for D2, 35 for D3 and D4, and 30 for D5.
	
\begin{figure}[H]
	\centering
	\includegraphics[width=0.9\linewidth]{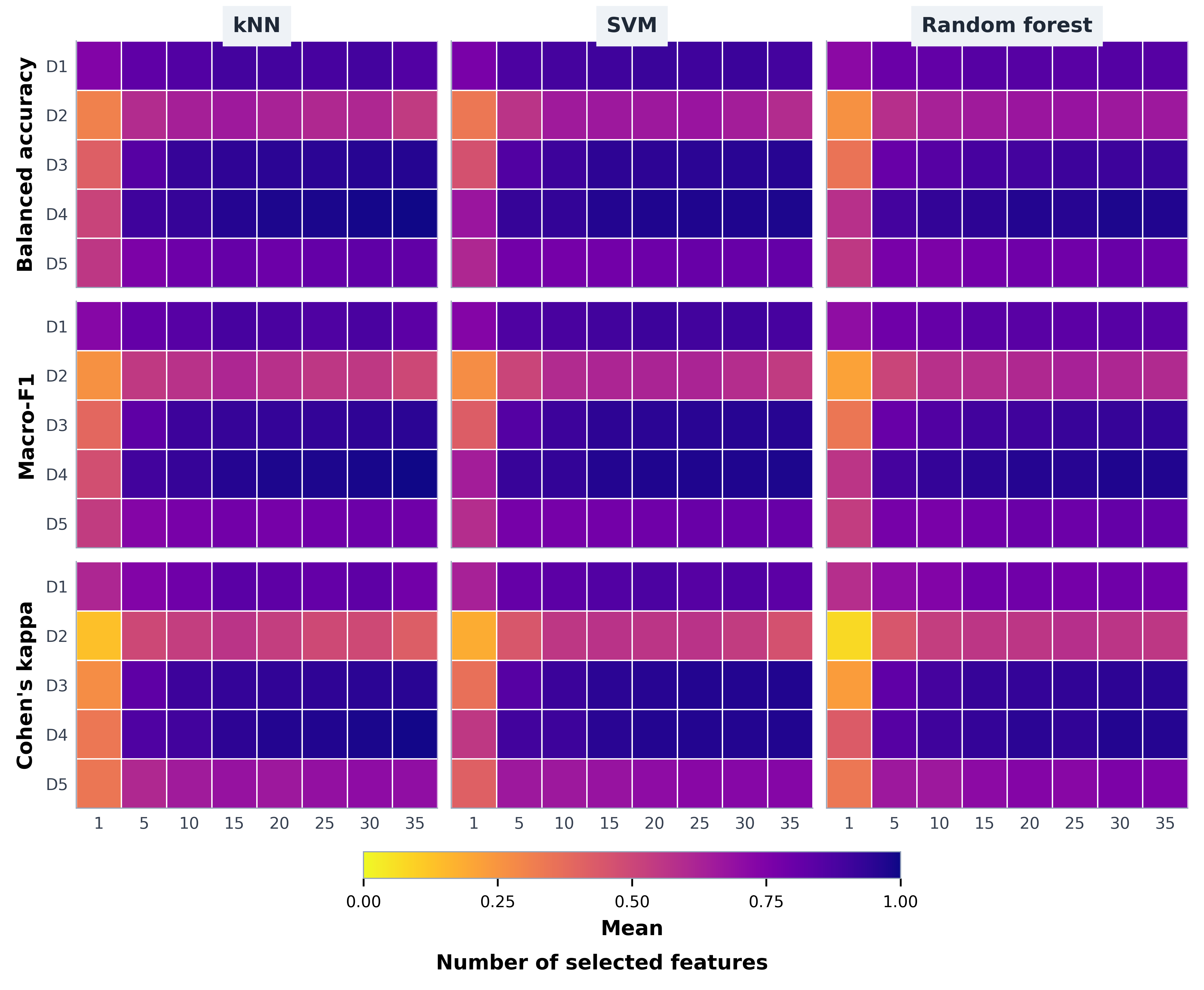}
	\caption{ARISE performance across datasets, classifiers, metrics, and feature budgets.}
	\label{fig:performance-atlas}
\end{figure}

	Predictive performance was accompanied by informative selected-set stability patterns (\cref{fig:stability}). At 35 features, the mean pairwise Jaccard overlap was 0.836 for D1, 0.457 for D2, 0.461 for D3, 0.337 for D4, and 0.237 for D5. The strong overlap for D1 reflects highly consistent feature recovery within its compact 45-feature universe, where selecting 35 features naturally increases agreement across folds. For the higher-dimensional datasets, the more moderate overlap suggests that \ARISE\ preserves predictive performance while allowing flexibility in the choice of near-equivalent feature subsets across resamples. This pattern is consistent with the presence of multiple competitive biomarker combinations in complex molecular settings, rather than reliance on a single rigid panel. Overall, the stability results support \ARISE\ as a robust feature-selection framework, while indicating that biomarker-level interpretation should be complemented by external validation and assay-level replication.
	
	\begin{figure}[H]
		\centering
		\includegraphics[width=0.98\linewidth]{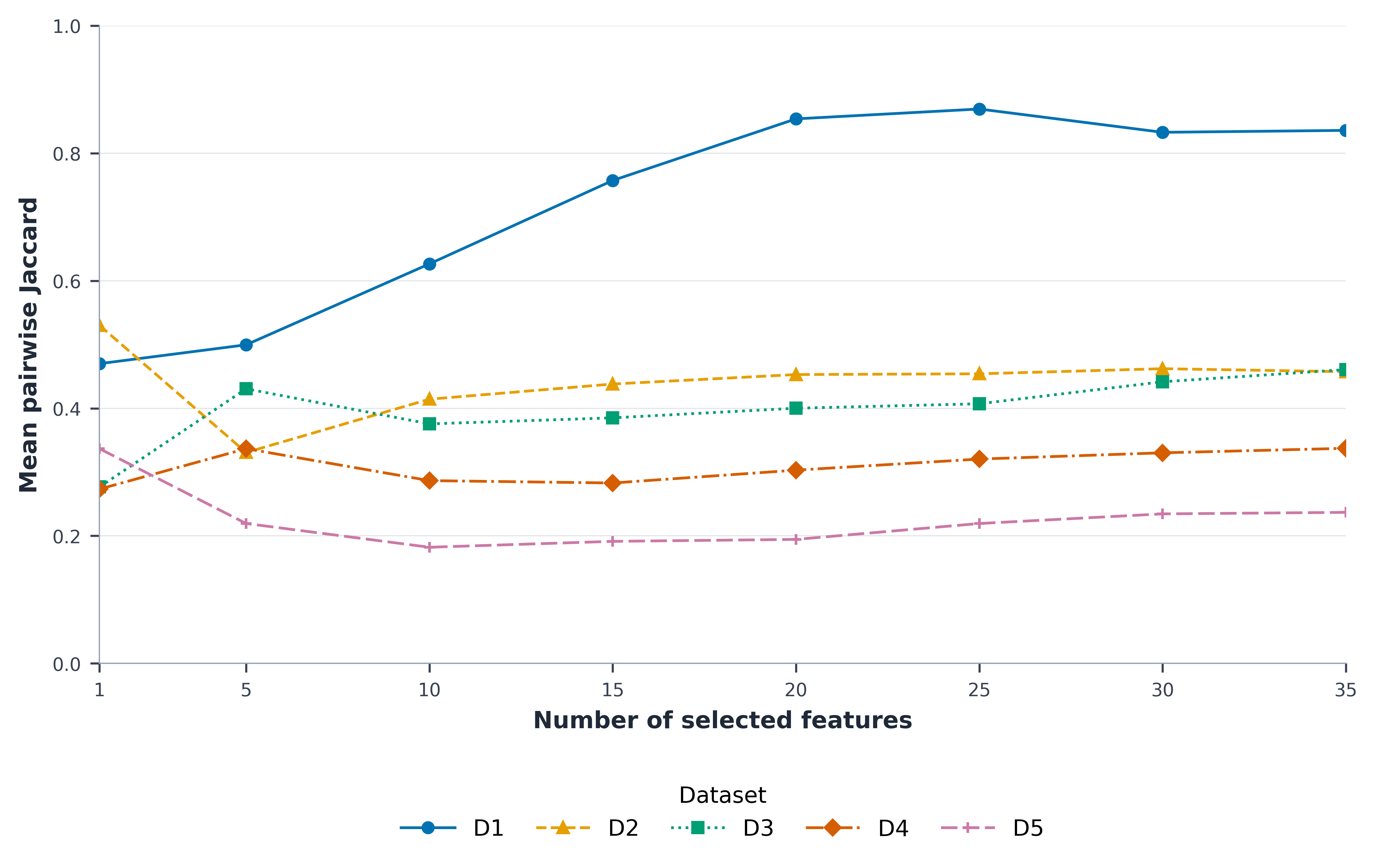}
		\caption{Selected-set stability of ARISE across feature budgets for D1--D5.}
		\label{fig:stability}
	\end{figure}

	\FloatBarrier
	\section{Discussion}
	\label{sec:discussion}
	
	Across the five datasets, three reported metrics, and three fixed classifiers, \ARISE\ had the largest equal-dataset mean and rank 1 in all 15 dataset--metric cells ($5$ datasets $\times 3$ metrics), and its three-metric composite exceeded those of SNR and MI, the second- and third-ranked standalone methods. The agreement among balanced accuracy, macro-F1, and $\kappa$ is useful because the measures stress different failure modes: minority-class recall, classwise precision--recall balance, and chance-corrected concordance. Their distributional plots nevertheless show substantial within-dataset overlap, indicating that aggregate superiority does not imply uniform fold-level dominance. The results therefore support the proposed architecture as a methodological candidate for further independent evaluation rather than as evidence of universal superiority.
	
	The method's value lies in how it reconciles signal geometries rather than in a new isolated score. Percentile normalization places heterogeneous evidence on a common scale; class-balanced subsampling reduces majority-class control of robustness; residual correlation discourages redundant within-class variation; pair coverage rewards features that resolve different class contrasts; and soft consensus averages parameters of eligible near-tied profiles instead of choosing a single winner. The formal results support this interpretation: the construction is bounded, the coverage term is monotone submodular, redundancy is invariant to class-specific shifts, paths are deterministically nested, and consensus parameters remain within the eligible-profile hull. These are properties of internal coherence and reproducibility, not guarantees of biological causality or out-of-distribution performance. A component-by-component ablation remains necessary to quantify which mechanisms generate the observed gains.
	
	The five datasets expose distinct translational contexts. D1 shows that \ARISE\ can exploit a compact methylation panel, but because only 45 CpGs enter the package object, it does not demonstrate de novo genome-scale assay discovery. D2 is a controlled Nutrimouse experiment and broadens multiclass signal geometry without constituting human clinical validation. D3 provides the largest sample and strong discrimination among acute lymphoblastic leukemia subtypes, yet remains internal resampling rather than cohort transfer. D4 is a useful four-class molecular stress test, but its processed native codes limit clinical interpretation. D5 provides a three-class peripheral-blood expression setting relevant to inflammatory bowel disease and also contributed the largest proposed-method margins. This heterogeneity supports method development while preventing the pooled mean from being read as the expected effect for a single clinical population.
	
	Thirty selected features was the best common setting, but the optimum ranged from 15 to 35 by dataset. That variation argues for budget choice inside the training loop when a study has no prespecified assay size; if laboratory cost fixes the panel size, the budget should instead be declared before evaluation. Stability provides a second qualification. D4 and D5 retained low overlap at larger budgets, and D1's high overlap is partly a combinatorial consequence of selecting most of a 45-feature universe. Accurate prediction and reproducible biomarker identification are therefore separate objectives. Future work should use chance-corrected stability, pathway coherence, assay replication, and untouched cohorts rather than interpreting selection frequency alone.
	
	Contemporary medical AI increasingly combines stability, explainability, networks, and multi-objective search \cite{Chereda2024Stable,Zhou2025Rough,Shi2025DRExplainer}. \ARISE\ is deliberately more transparent than graph or foundation models: it returns one transparent ordered panel, separates selector adaptation from fixed-learner comparison, and exposes each source of gain. That transparency is attractive for small studies, while pathway structure, assay uncertainty, patient covariates, calibration, treatment decisions, and subgroup robustness remain outside the current model. A clinically credible extension should freeze the selector before testing, compare it with stable and network-based methods on untouched cohorts, and quantify calibration, decision-curve utility, assay feasibility, and the reproducibility of individual selected features.
	
	\section{Conclusion}
	
	\ARISE\ provides a coherent multiclass feature-selection framework combining complementary relevance evidence, balanced-subsample robustness, within-class residual redundancy control, class-pair coverage, nested profile racing, and convex consensus. Across five evaluated datasets, three reported metrics, and three fixed classifiers, it ranked first in all 15 dataset--metric cells. Although 30 features produced the best common-budget mean, the dataset-specific optima differed, highlighting the value of evaluating the method across multiple feature-set sizes rather than relying on a single prespecified panel size. The consistent results across balanced accuracy, macro-F1, and $\kappa$ further indicate that the observed advantage is not confined to a single performance criterion.
	
	The study also has several clearly defined methodological boundaries that help delimit the current evidence. The use of five heterogeneous benchmark datasets provides variation in sample size, dimensionality, molecular platform, and class structure, while preserving a controlled experimental setting for evaluating the selector. Package-level preprocessing was retained to ensure reproducibility with established public data objects, whereas all additional preprocessing, feature selection, tuning, and model fitting introduced in this study were restricted to the corresponding training partitions. The inclusion of a preclinical dataset broadens the range of signal structures examined, although clinical interpretation remains focused on methodological performance rather than direct translational claims. Similarly, the use of fixed classifiers was intentional, allowing differences in predictive performance to be attributed more directly to feature selection rather than to extensive learner-specific hyperparameter optimization. The repeated five-fold design provided extensive resampling across 250 held-out folds per condition, while the dataset, rather than the individual fold, remained the unit of across-study inference. Finally, variation in selected-set overlap across datasets highlights the ability of \ARISE\ to identify alternative competitive feature subsets in high-dimensional settings and also motivates complementary stability analyses when the goal extends from prediction to biomarker interpretation. These design choices define a rigorous benchmarking stage and provide a clear basis for future evaluation on prospectively specified datasets, independent external cohorts, assay-level replication, calibration, decision-curve analysis, subgroup assessment, and chance-corrected feature stability before clinical translation.

	\section*{Data availability}
	The datasets analyzed in this study are publicly available through the data resources and originating studies identified in \cref{tab:datasets,sec:provenance}. The analyzed object names and package versions are reported in the Methods section to support object-level provenance.
	
	\section*{Declaration of competing interest}
	The authors declare that they have no known competing financial interests or personal relationships that could have appeared to influence the work reported in this paper.
	
	\section*{Funding}
This work was sponsored by the United Arab Emirates University under grand 12B086.
	
\section*{CRediT authorship contribution statement}

\textbf{Zardad Khan:} Conceptualization, Methodology, Software, Formal analysis, Visualization, Supervision, Writing -- original draft, Writing -- review \& editing. \textbf{Amjad Ali:} Methodology, Software, Investigation, Data curation, Validation, Formal analysis, Writing -- review \& editing. \textbf{Naz Gul:} Investigation, Data curation, Formal analysis, Validation, Visualization, Methodology, Writing -- review \& editing. \textbf{Sheema Gul:} Investigation, Data curation, Resources, Validation, Visualization, Writing -- review \& editing. \textbf{Saeed Aldahmani:} Conceptualization, Resources, Funding acquisition, Project administration, Supervision, Validation, Writing -- review \& editing.

	\bibliographystyle{elsarticle-num}
	\bibliography{references}
	
\end{document}